\documentclass[twoside,english]{article}
\usepackage[T1]{fontenc}
\usepackage{geometry}
\geometry{verbose,tmargin=3cm,bmargin=3cm,lmargin=2.5cm,rmargin=2.5cm}

\usepackage{color}
\usepackage{babel}
\usepackage{array}
\usepackage{verbatim}
\usepackage{float}
\usepackage{mathtools}
\usepackage{bm}
\usepackage{multirow}
\usepackage{amsmath}
\usepackage{amsthm}
\usepackage{amssymb}
\usepackage{graphicx}
\usepackage{booktabs}
\usepackage{todonotes}
\usepackage{algorithm}
\usepackage{algorithmic}
\usepackage{url}
\usepackage{authblk}
\usepackage[round]{natbib}
\usepackage{hyperref}
\bibliographystyle{abbrvnat}

\usepackage{tabularx}

\theoremstyle{plain}
\newtheorem{theorem}{Theorem}[section]

\theoremstyle{definition}

\newtheorem{assumption}[theorem]{Assumption}
\theoremstyle{remark}

\begin{document}
\title{Reward Learning From Preference With Ties}

\author[1]{Jinsong Liu\thanks{liujinsong@163.sufe.edu.cn}}
\author[2]{Dongdong Ge\thanks{ddge@sjtu.edu.cn}}
\author[3]{Ruihao Zhu\thanks{ruihao.zhu@cornell.edu}}
\affil[1]{Shanghai University of Finance and Economics}
\affil[2]{Shanghai Jiao Tong University}
\affil[3]{Cornell University}
\date{}

\maketitle

\begin{abstract}
    Reward learning plays a pivotal role in Reinforcement Learning from Human Feedback (RLHF), ensuring the alignment of language models. The Bradley-Terry (BT) model stands as the prevalent choice for capturing human preferences from datasets containing pairs of chosen and rejected responses. In preference modeling, the focus is not on absolute values but rather on the reward difference between chosen and rejected responses, referred to as preference strength. Thus, precise evaluation of preference strength holds paramount importance in preference modeling. However, an easily overlooked factor significantly affecting preference strength measurement is that human attitudes towards two responses may not solely indicate a preference for one over the other and ties are also a common occurrence. To address this, we propose the adoption of the generalized Bradley-Terry model -- the Bradley-Terry model with ties (BTT) -- to accommodate tied preferences, thus leveraging additional information. We prove that even with the access to the true distributions of prompt and response, disregarding ties can lead to a notable bias in preference strength measurement. Comprehensive experiments further validate the advantages of incorporating ties in preference modeling. Notably, fine-tuning with BTT significantly outperforms fine-tuning with BT on synthetic preference datasets with ties, labeled by state-of-the-art open-source LLMs.
\end{abstract}

\section{Introduction}
Reinforcement learning from human feedback (RLHF) \citep{christiano2017deep, ziegler2019fine, ouyang2022training} has played a pivotal role in aligning large language models (LLMs) \citep{kenton2021alignment}, enhancing specific capabilities of LLMs in various fields, such as summarization \citep{stiennon2020learning}, coding \citep{gao2023pal}, and medical assistance \citep{moor2023foundation}. A crucial component of the RLHF process is the reward model, which serves as the primary mechanism for integrating human preferences and feedback into the learning process \citep{wang2024secrets}. The reward model guides the optimization procedure of RLHF towards objectives aligned with human preferences \citep{kaufmann2023survey}. Therefore, the accuracy of the reward model greatly affects or even determines the effectiveness of alignment with human preferences. Moreover, the direct preference optimization (DPO) method \citep{rafailov2024direct} utilizes LLMs to implicitly represent the reward model through mathematical transformations, bypassing the complex RL optimization phase and focusing solely on the reward modeling phase. As a simplified alternative to RLHF, DPO has demonstrated computational efficiency and competitive performance compared to RLHF.

To learn a reward model from human preferences, obtaining high-quality human preference data is crucial \citep{wang2024secrets}, typically achieved by having human labelers annotate previously collected data consisting of a prompt and a pair of responses \citep{ouyang2022training, bai2022training}. We note that conventional approaches \citep{rafailov2024direct, ziegler2019fine, stiennon2020learning} often assume that the latent human preference model follows the Bradley-Terry (BT) \citep{bradley1952rank} model, where the preference distribution can be expressed as:
\begin{equation*}
p_r\left(y_1 \succ y_2 \mid x\right)=\frac{\exp \left(r\left(x, y_1\right)\right)}{\exp \left(r\left(x, y_1\right)\right)+\exp \left(r\left(x, y_2\right)\right)} = \sigma (-\Delta r)
\end{equation*}
where $r$ is the latent reward model, $\sigma$ is sigmoid function, and $\Delta r = r(x,y_1) - r(x, y_2)$ is preference strength. Consequently, human labelers are presented with only two options for each prompt $x$ and pair of responses $y_1$ and $y_2$: either $y_1$ is preferred or $y_2$ is preferred. However, this approach does not fully align with human attitudes towards preferences, as humans often perceive two responses as ties when their rewards have very little difference. To illustrate this point, we refer to the findings of \citet{wang2024secrets} due to the scarcity of preference datasets including ties. In their study, $10$ different reward models are trained on Anthropic’s HH-RLHF \citep{bai2022training} dataset assuming the BT preference model. The mean and standard deviation of these $10$ reward models can partially reflect the real human preferences of this dataset. From the results of \citet{wang2024secrets}, the absolute value of the preference strength for a large amount of data is close to $0$, indicating that the preference difference in these cases is minimal and the pair of responses can often be considered as ties. We provide examples of data with small absolute mean preference strength in \autoref{data_examples} to further emphasize the importance of introducing ties in preferences.

\begin{table}[h]
  \caption{Examples of pairs of responses with low absolute mean preference strength in the annotated Anthropic’s HH-RLHF dataset \citep{wang2024secrets}.}
  \label{data_examples}
  \centering
  \begin{tabularx}{\textwidth}{lX}
    \toprule
    Prompt     & \textbf{Can you help me set up an outdoor running routine so that I can get in shape?} \\
    \midrule
    Chosen Response & What kind of running are you interested in?     \\
    Rejected Response & Sure, what kind of program are you looking for?  \\
    Preference Strength & Mean : 0.0027, Standard Deviation: 0.22  \\
    \midrule
    Prompt & \textbf{Can you tell me what the meaning and themes are of the book Oliver Twist}\\
    Chosen Response & Sure, here are some common themes, and there are a few interpretations that scholars have proposed:  - the evil and temptations of London, - the books portray Oliver as a victim of the harsh society, -  Oliver is also a victim of his situation, - the story is an attack on\\
    Rejected Response & The meaning of the book is about the theme of being homeless and being poor. This is an important thing in the society we live in today. When we have children, we want them to be able to live a good life with lots of opportunities. However, we don’t have all the resources to provide that to them. So the book is about the struggles of poor people trying to live a life with a family, which is difficult and sad.\\
    Preference Strength & Mean : 0.0, Standard Deviation: 0.36  \\
    \midrule
    Prompt & \textbf{I want to purchase a men's suit, but I don't know what I am looking for or how to get started.}\\
    Chosen Response & Sure! In what price range are you looking to spend?\\
    Rejected Response & Great, let me walk you through the process. First we'll start with the size, do you know your measurements?\\
    Preference Strength & Mean : 0.00019, Standard Deviation: 0.23  \\
    \bottomrule
  \end{tabularx}%
\end{table}

In this paper, we propose to model human preferences with the Bradley-Terry model with ties (BTT) \citep{rao1967ties}. We demonstrate that if we blindly adopt BT as our preference model (with the true latent preference model being BTT) can lead to significant bias in measuring preference strength due to model mismatch. To illustrate the model mismatch problem occurring in conventional preference datasets lacking ties when simply assuming the latent preference model is BT, we first introduce a simulated preference generation procedure. Specifically, we generate two preference datasets—one with ties and one without ties. By analyzing the maximum likelihood estimates (MLE) based on BTT and BT preference models on these datasets respectively, we quantify the bias in measuring preference strength due to the model mismatch. Furthermore, we show that although the bias term is bounded, it can still have a substantial impact. Since most conventional preference datasets lack ties, we propose a novel method to address the preference model mismatch problem, which subtracts the bias term from the MLE loss function to recover the true preference strength measurement. This method can be viewed as a variant of adaptive margin \citep{touvron2023llama} when training the reward model and a variant of DPO with offset (ODPO) when training DPO \citep{amini2024direct}. To further demonstrate the benefit of incorporating ties in preference modeling, we use state-of-the-art open-source LLMs to simulate human judgment and label ties in a conventional preference dataset without ties, and then evaluate the fine-tuned models on this synthetic preference dataset with ties. It is important to note that the main limitation of this paper is the inability to conduct experiments on real human-labeled preference datasets with ties, due to the scarcity of such datasets and the high cost of manual annotation and evaluation. Addressing this limitation could be considered for future work.

\textbf{Main Contributions.} Our contributions can be outlined as follows:
\begin{itemize}
    \item We advocate for the inclusion of tie options when labeling preference data, aligning with human preference habits. To the best of our knowledge, we are the first to propose the use of BTT to model human preference.
    \item We derive the bias in measuring preference strength caused by model mismatch when assuming the latent preference model is BTT. To address this, we propose a novel bias-correction method to mitigate this bias in conventional preference datasets without ties, as validated by comprehensive experimental results.
    \item We generate a synthetic preference dataset with ties, labeled by state-of-the-art open-source LLMs, and evaluate fine-tuning with BTT and BT on this dataset. The results show that fine-tuning with BTT consistently outperforms fine-tuning with BT.
\end{itemize}
\section{Related Work}
The reward model plays a crucial role in RLHF, guiding LLMs towards objectives aligned with human preferences \citep{christiano2017deep, kaufmann2023survey}. Recent related work has addressed various aspects of reward modeling. \citet{wang2024secrets} conducted a comprehensive study on reward models, proposing a method to measure the strength of preferences within the data and introducing contrastive learning to enhance the ability of reward models to distinguish between chosen and rejected responses. \citet{zhu2024iterative} analyzed reward overfitting and overoptimization problems in RLHF, proposing to mitigate them using an iterative data smoothing method. \citet{dai2023safe} proposed training a cost model in addition to the reward model to decouple human preferences regarding helpfulness and harmlessness.

As a simplified alternative to RLHF, DPO \citep{rafailov2024direct} has achieved significant success and impact. The core concept of DPO involves implicitly representing the reward model using LLMs through a clever reparameterization. Recently, there has been extensive research focused on enhancing and broadening the scope of DPO. \citet{amini2024direct} propose DPO with an offset (ODPO), where the likelihood difference between the preferred and dispreferred response must exceed an offset value. \citet{zhou2023beyond} extend DPO for multiple alignment objectives by training LMs as implicit collective reward models, combining all objectives with specific weightings. \citet{chowdhury2024provably} propose robust DPO methods to mitigate the bias introduced by noise in preference data on average.

The preference model serves as the foundation for reflecting human feedback, with the Bradley-Terry (BT) model \citep{bradley1952rank} being the most commonly used preference model in RLHF. Indeed, various generalized models based on the BT model have been proposed to address different scenarios, such as handling home advantage \citep{agresti2012categorical}, ties \citep{rao1967ties}, multiple comparisons \citep{plackett1975analysis, luce2005individual}, and team comparisons \citep{huang2006generalized}. In particular, the Plackett-Luce (PL) model, a popular extension for handling multiple comparisons, has also found application in RLHF \citep{zhu2023principled, song2024preference}.

\section{Preliminaries}
\textbf{RLHF} typically comprises three phases: supervised fine-tuning (SFT), reward learning, and reinforcement learning. In the first phase, a pre-trained language model undergoes fine-tuning via supervised learning on high-quality data tailored for specific tasks such as dialogue and summarization. This fine-tuning process yields the model $\pi^{\mathrm{SFT}}$. The second phase involves reward learning on a preference dataset. To construct this dataset, prompts $x \sim \mathcal{X}$ are fed to $\pi^{\mathrm{SFT}}$, generating pairs of responses $\left(y_1, y_2\right) \sim \pi^{\mathrm{SFT}}(y \mid x)$. These pairs are presented to human labelers, who express preferences. Conventional preference datasets do not allow ties and require one response to be preferred over the other, denoted as $y_w \succ y_l \mid x$, where $y_w$ and $y_l$ represent the preferred and dispreferred completions among $\left(y_1, y_2\right)$, respectively. The most popular approach to modeling preference is the Bradley-Terry (BT) model, which assumes the human preference distribution $p^*$ as:
\begin{equation}\label{BT_model}
p^*\left(y_1 \succ y_2 \mid x\right)=\frac{\exp \left(r^*\left(x, y_1\right)\right)}{\exp \left(r^*\left(x, y_1\right)\right)+\exp \left(r^*\left(x, y_2\right)\right)} .
\end{equation}
where $r^*(y, x)$ is the latent reward model which is inaccessible. Assuming access to a static dataset of comparisons $\mathcal{D}=\left\{x^{(i)}, y_w^{(i)}, y_l^{(i)}\right\}_{i=1}^N$ sampled from $p^*$, we can parametrize a reward model $r_\psi(x, y)$ and estimate the parameters via maximum likelihood. Framing the problem as a binary classification we have the negative log-likelihood loss:
\begin{equation*}
\mathcal{L}_R\left(r_\psi, \mathcal{D}\right)=-\mathbb{E}_{\left(x, y_w, y_l\right) \sim \mathcal{D}}\left[\log \sigma\left(r_\psi\left(x, y_w\right)-r_\psi\left(x, y_l\right)\right)\right] ,
\end{equation*}
where $\sigma$ is the logistic function. And the third phase is to solve the following RL problem with the learned reward function:
\begin{equation}\label{RL_prob}
    \max _{\pi_\theta} \mathbb{E}_{x \sim \mathcal{D}, y \sim \pi_\theta(y \mid x)}\left[r_\psi(x, y)\right]-\beta \mathbb{D}_{\mathrm{KL}}\left[\pi_\theta(y \mid x) \| \pi^{\mathrm{SFT}}(y \mid x)\right] ,
\end{equation}
where $\beta$ is a parameter controlling the deviation from the base reference policy $\pi^{\mathrm{SFT}}$.

\textbf{DPO} utilizes the fact that the optimization problem \eqref{RL_prob} has the closed form solution \citep{go2023aligning, korbak2022reinforcement, peng2019advantage, peters2007reinforcement}:
\begin{equation*}
    \pi_r(y \mid x)=\frac{1}{Z(x)} \pi^{\mathrm{SFT}}(y \mid x) \exp \left(\frac{1}{\beta} r(x, y)\right) .
\end{equation*}
Then a clever reparameterization is applied to express the reward function in terms of its corresponding optimal policy $\pi_r$:
\begin{equation*}
    r(x, y)=\beta \log \frac{\pi_r(y \mid x)}{\pi^{\mathrm{SFT}}(y \mid x)}+\beta \log Z(x) .
\end{equation*}
Applying this reparameterization to the ground-truth reward $r^*$ and corresponding optimal model $\pi^*$, then substituting this reparameterization into the BT model \eqref{BT_model}, analogous to the reward modeling approach, the loss function of DPO becomes:
\begin{equation*}
    \mathcal{L}_{\mathrm{DPO}}\left(\pi_\theta ; \pi_{\mathrm{ref}}\right)=-\mathbb{E}_{\left(x, y_w, y_l\right) \sim \mathcal{D}}\left[\log \sigma\left(\beta \log \frac{\pi_\theta\left(y_w \mid x\right)}{\pi^{\mathrm{SFT}}\left(y_w \mid x\right)}-\beta \log \frac{\pi_\theta\left(y_l \mid x\right)}{\pi^{\mathrm{SFT}}\left(y_l \mid x\right)}\right)\right] .
\end{equation*}

\textbf{Bradley-Terry model with ties (BTT)} \citep{rao1967ties} can be employed to model human preference with ties, i.e., the two response $\left(y_1, y_2\right) \sim \pi^{\mathrm{SFT}}(y \mid x)$ are considered equal with respect to the prompt $x$:
\begin{equation}
\begin{aligned}
p_{\theta}^*\left(y_1 = y_2 \mid x \right)=&\frac{(\theta^2 - 1)\exp \left(r^*\left(x, y_1\right)\right)\exp \left(r^*\left(x, y_2\right)\right)}{\left(\exp \left(r^*\left(x, y_1\right)\right)+ \theta \exp \left(r^*\left(x, y_2\right)\right)\right)\left(\theta \exp \left(r^*\left(x, y_1\right)\right)+ \exp \left(r^*\left(x, y_2\right)\right)\right)}\\
p_{\theta}^*\left(y_1 \succ y_2 \mid x\right)=&\frac{\exp \left(r^*\left(x, y_1\right)\right)}{\exp \left(r^*\left(x, y_1\right)\right)+ \theta \exp \left(r^*\left(x, y_2\right)\right)}
\end{aligned}
\end{equation}
where \( \theta \geq 1 \) is the parameter controlling the tendency to ties, with a larger \( \theta \) indicating a higher probability of ties occurring. It's worth noting that when \( \theta = 1 \), the BTT model is equivalent to the BT model.

\section{Preference Modeling With Ties}
For a given reward model \( r \), RLHF focuses not on the absolute values \( r(x,y_1), r(x,y_2) \) but on the preference strength between the pair of responses \citep{wang2024secrets}:
\begin{equation*}
    \Delta r = r(x,y_1) - r(x,y_2)
\end{equation*}
In this section, we will explain that if the real preference model is BTT, but we do not provide human labelers with the option of a tie to generate the preference dataset, the learned reward model will exhibit significant deviation from the real reward model in measuring preference strength.
\subsection{Preference Dataset Under BTT}\label{pref_dataset_BTT}
Since previous preference datasets do not include ties, we will first explain the simulation process for obtaining a preference dataset without ties when assuming the preference model is BTT. Suppose we have \( n \) samples, each consisting of a prompt and a pair of responses, denoted as \( D = \{\left(x_i, y_i^1, y_i^2\right)\}_{i=1}^n \). With \( D \) available, if we assume the true preference model is BTT, we can obtain preference datasets with and without ties using the following methods:
\begin{itemize}
\item Offer three options to human labelers: \( y_1 \succ y_2 \), \( y_2 \succ y_1 \), or \( y_1 = y_2 \). Then, we can derive a preference dataset with ties \( D^{BTT} \) from the original dataset \( D \). We denote that \( D^{BTT} = D^{BT} \cup D^T \), where \( D^{BT} = \{\left(x_i, y_i^{w}, y_i^{l}\right)\}, y_i^{w} \succ y_i^{l}, i \in \mathcal{J}; D^{T} = \{\left(x_i, y_i^{1}, y_i^{2}\right)\}, y_i^{1} = y_i^{2}, i \in \mathcal{K} \), and \( \mathcal{J} \cup \mathcal{K} = \{n\} \).
\item For the ties dataset \( D^T \), ask human labelers to further specify which response is preferred, resulting in the dataset \( D^{TN} = \{\left(x_i, y_i^{w}, y_i^{l}\right)\}, y_i^{w} \succ y_i^{l}, i \in \mathcal{K} \). We denote \( D^{BTTN} = D^{BT} \cup D^{TN} \).
\end{itemize}
\begin{assumption}\label{ass_rand_label_in_ties}
    Human labelers randomly label responses in ties, assigning each response an equal probability of being preferred.
\end{assumption}
In summary, if we assume the preference model is the BTT model and provide the option for ties to human labelers, we obtain the preference dataset with ties \( D^{BTT} \). By subsequently asking human labelers to specify preferred responses within ties, we derive the preference dataset without ties \( D^{BTTN} \). Therefore, we can consider conventional preference datasets without ties as \( D^{BTTN} \).
\subsection{Bias in Measuring Preference Strength}
Assuming we have both \( D^{BTT} \) and \( D^{BTTN} \) derived from \( D \), we can illustrate how to estimate the latent reward model using maximum likelihood estimation (MLE). Since we assume that the latent preference model is BTT and thus obtain the dataset with ties, the most accurate log-likelihood is:
\begin{equation}\label{LCE_BTT}
    LCE^{BTT}(r, D) = \sum_{(x, y_w,y_l) \in D^{BT}} \log p_r^{\theta}(y_w \succ y_l \mid x) + \sum_{(x, y_1,y_2) \in D^{T}} \log p_r^{\theta}(y_1 = y_2 \mid x)
\end{equation}
Conventional approaches to estimate the latent reward model typically utilize \( D^{BTTN} \) to fit the BT model, with the log-likelihood given by:
\begin{equation}\label{LCE_BT}
    LCE^{BT}(r, D) = \sum_{(x, y_w, y_l) \in D^{BTTN}} \log p_r(y_w \succ y_l \mid x)
\end{equation}
We can demonstrate that, even if we possess access to the true prompt and response distributions, there may exist a noteworthy discrepancy between the learned and the actual reward model in measuring preference strength, as illustrated by the following results.

First, we can establish the relationship between the true reward model \( r^* \) and the learned reward model \( \hat{r} \) by fully optimizing \eqref{LCE_BT} in \autoref{relation}.
\begin{theorem}\label{relation}
    \begin{equation*}
    \mathbb{E}\left[LCE^{BT}(r, D)\right] \le \mathbb{E}\left[LCE^{BT}(\hat{r}, D)\right], \forall r \neq \hat{r}
    \end{equation*}
    where $\hat{r}$ satisfies 
    \begin{equation}\label{r_hat}
    p_{\hat{r}}(y_1 \succ y_2 \mid x) = q_{r^*}^{\theta}(y_1 \succ y_2 \mid x), \forall x \sim \mathcal{X}, (y_1, y_2) \sim \pi^{\mathrm{SFT}}(y \mid x)
    \end{equation}
    and
    \begin{equation}
        q_r^{\theta}\left(y_1 \succ y_2 \mid x\right) = p_r^{\theta}\left(y_1 \succ y_2 \mid x\right) + \frac{1}{2}p_r^{\theta}\left(y_1 = y_2 \mid x\right)
    \end{equation}
\end{theorem}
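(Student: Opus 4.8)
The plan is to reduce everything to the non-negativity of the KL divergence, applied sample by sample. First I would pin down the distribution of the label that actually gets recorded in $D^{BTTN}$. Tracing the two-stage construction $D^{BTTN}=D^{BT}\cup D^{TN}$: under the true BTT model a given pair $(x,y_1,y_2)\in D$ is reported as $y_1\succ y_2$ with probability $p_{r^*}^{\theta}(y_1\succ y_2\mid x)$, as $y_2\succ y_1$ with probability $p_{r^*}^{\theta}(y_2\succ y_1\mid x)$, and as a tie with probability $p_{r^*}^{\theta}(y_1=y_2\mid x)$; in the tie case Assumption~\ref{ass_rand_label_in_ties} resolves it to either orientation with conditional probability $\tfrac12$. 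Hence in $D^{BTTN}$ the chance that $y_1$ is the chosen response is exactly $p_{r^*}^{\theta}(y_1\succ y_2\mid x)+\tfrac12 p_{r^*}^{\theta}(y_1=y_2\mid x)=q_{r^*}^{\theta}(y_1\succ y_2\mid x)$, and its complement is $q_{r^*}^{\theta}(y_2\succ y_1\mid x)=1-q_{r^*}^{\theta}(y_1\succ y_2\mid x)$ (using that the tie event is symmetric in $y_1,y_2$). This is the conceptual heart: the effective data-generating law for a preference dataset without ties is $q_{r^*}^{\theta}$, not $p_{r^*}$.

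Next I would take the expectation of \eqref{LCE_BT} over this labeling randomness — and over $(x,y_1,y_2)$ too, if $D$ is regarded as drawn from the true prompt/response distributions, since the remaining argument is pointwise in $(x,y_1,y_2)$ and so is insensitive to this choice. This rewrites $\mathbb{E}[LCE^{BT}(r,D)]$ as a sum over the dataset of terms $q_{r^*}^{\theta}(y_1\succ y_2\mid x)\log p_r(y_1\succ y_2\mid x)+q_{r^*}^{\theta}(y_2\succ y_1\mid x)\log p_r(y_2\succ y_1\mid x)$. Because $p_r(y_1\succ y_2\mid x)+p_r(y_2\succ y_1\mid x)=1$ under the BT model, each such term is minus the cross-entropy between the Bernoulli laws $\mathrm{Bern}\!\big(q_{r^*}^{\theta}(y_1\succ y_2\mid x)\big)$ and $\mathrm{Bern}\!\big(p_r(y_1\succ y_2\mid x)\big)$, so by Gibbs' inequality it is at most $-H\!\big(q_{r^*}^{\theta}(\cdot\mid x)\big)$, with equality if and only if $p_r(y_1\succ y_2\mid x)=q_{r^*}^{\theta}(y_1\succ y_2\mid x)$. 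Summing over the dataset, $\mathbb{E}[LCE^{BT}(r,D)]$ is maximized precisely when $p_r(y_1\succ y_2\mid x)=q_{r^*}^{\theta}(y_1\succ y_2\mid x)$ for every sample in the support, i.e.\ exactly when $r=\hat r$ in the sense of \eqref{r_hat}, which yields $\mathbb{E}[LCE^{BT}(r,D)]\le\mathbb{E}[LCE^{BT}(\hat r,D)]$ for all $r$.

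I do not expect a serious obstacle — the computation is short. The one place that needs care is deriving the recorded-label distribution in the first step: this is where Assumption~\ref{ass_rand_label_in_ties} enters and where the characteristic $\tfrac12$ coefficient in $q_{r^*}^{\theta}$ appears, so the bookkeeping of the $D^{BT}$ versus $D^{TN}$ contributions is the crux. It is also worth remarking that $\hat r$ is only determined through the differences $\hat r(x,y_1)-\hat r(x,y_2)=\sigma^{-1}\!\big(q_{r^*}^{\theta}(y_1\succ y_2\mid x)\big)$, hence unique only up to an additive function of $x$ (which leaves $p_{\hat r}$ unchanged); such an $\hat r$ exists because $\sigma:\mathbb{R}\to(0,1)$ is a bijection and $q_{r^*}^{\theta}\in(0,1)$.
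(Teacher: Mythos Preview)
Your proposal is correct and follows essentially the same approach as the paper: both first use Assumption~\ref{ass_rand_label_in_ties} to identify the effective label distribution in $D^{BTTN}$ as $q_{r^*}^{\theta}$, and then invoke the non-negativity of KL divergence to conclude that the expected log-likelihood is maximized at $p_r=q_{r^*}^{\theta}=p_{\hat r}$. The only cosmetic difference is that the paper packages the second step as a single global Jensen inequality $\mathbb{E}\big[\log(p_r/p_{\hat r})\big]\le\log\mathbb{E}\big[p_r/p_{\hat r}\big]=\log 1=0$, whereas you apply Gibbs' inequality pointwise to the Bernoulli label at each $(x,y_1,y_2)$; your additional remarks on existence and the additive-in-$x$ indeterminacy of $\hat r$ are a nice clarification not present in the paper.
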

\begin{proof}
    By Assumption~\ref{ass_rand_label_in_ties} we know that the true preference distribution without ties is $q_r^{\theta}$. Therefore, it is equivalent to verify that:
    \begin{align*}
        \mathbb{E}_{x\sim \mathcal{X}, (y_1, y_2) \sim \pi^{\mathrm{SFT}}(y \mid x), (y_w, y_l) \sim q_{r^*}^{\theta}}\left[\log \frac{p_r(y_w \succ y_l \mid x)}{p_{\hat{r}}(y_w \succ y_l \mid x)}\right] \leq 0
    \end{align*}
    by Jensen's inequality we have:
    \begin{align*}
        &\mathbb{E}\left[\log \frac{p_r(y_w \succ y_l \mid x)}{p_{\hat{r}}(y_w \succ y_l \mid x)}\right] \leq \log \left( \mathbb{E}\left[ \frac{p_r(y_w \succ y_l \mid x)}{p_{\hat{r}}(y_w \succ y_l \mid x)}\right] \right)\\
        &=\log \left(\mathbb{E}_{(x, y_1, y_2)}\left[q_{r^*}^{\theta}(y_1 \succ y_2 \mid x) \frac{p_r(y_1 \succ y_2 \mid x)}{p_{\hat{r}}(y_1 \succ y_2 \mid x)} + q_{r^*}^{\theta}(y_2 \succ y_1 \mid x) \frac{p_r(y_2 \succ y_1 \mid x)}{p_{\hat{r}}(y_2 \succ y_1 \mid x)} \right] \right)\\
        &=\log \left(\mathbb{E}_{(x, y_1, y_2)}\left[p_r(y_1 \succ y_2 \mid x) + p_r(y_2 \succ y_1 \mid x) \right] \right)\\
        &=\log \left(\mathbb{E}_{(x, y_1, y_2)}\left[1 \right] \right)\\
        &=0
    \end{align*}
\end{proof}
\begin{theorem}\label{main_theorem}
    Even if we have the access to the true prompt and response distributions, there can be a bias in measuring preference strength:
    \begin{equation}\label{bias}
    \Delta \hat{r} = \Delta r^* + \log \left( \frac{2 \theta + \left(1 + \theta^2\right)\exp (- \Delta r^*)}{1 + \theta^2 + 2\theta \exp(-\Delta r^*)}\right ), \forall (x, y_1, y_2)
    \end{equation}
    where $\Delta r = r(x, y_1) - r(x, y_2)$.
\end{theorem}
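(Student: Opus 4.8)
The plan is to read off the statement almost directly from \autoref{relation} together with the explicit form of the BTT probabilities, and then push everything through the logit transform. By \autoref{relation}, the reward model $\hat r$ obtained by fully optimizing the expected BT log-likelihood \eqref{LCE_BT} satisfies $p_{\hat r}(y_1 \succ y_2 \mid x) = q_{r^*}^{\theta}(y_1 \succ y_2 \mid x)$ for (almost) every triple $(x,y_1,y_2)$; since $p_{\hat r}$ is by construction a BT model, the left-hand side equals $\sigma(\Delta\hat r)$ with $\Delta\hat r = \hat r(x,y_1) - \hat r(x,y_2)$. Applying the logit (the inverse of $\sigma$) to both sides therefore gives $\Delta\hat r = \log\frac{q_{r^*}^{\theta}(y_1\succ y_2\mid x)}{1-q_{r^*}^{\theta}(y_1\succ y_2\mid x)}$, so the whole theorem reduces to evaluating this log-odds ratio as a function of $\Delta r^*$.

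For the computation I would set $a = \exp(r^*(x,y_1))$ and $b = \exp(r^*(x,y_2))$, so that $\Delta r^* = \log(a/b)$, and combine $q_{r^*}^{\theta}(y_1\succ y_2\mid x) = p_{r^*}^{\theta}(y_1\succ y_2\mid x) + \tfrac12\, p_{r^*}^{\theta}(y_1=y_2\mid x)$ over the common denominator $(a+\theta b)(\theta a + b)$ using the two BTT formulas. Collecting the $a^2$ and $ab$ terms shows that the numerator of $q_{r^*}^{\theta}$ factors as $\tfrac{a}{2}\bigl(2\theta a + (1+\theta^2)b\bigr)$ and, symmetrically, the numerator of $1-q_{r^*}^{\theta}$ factors as $\tfrac{b}{2}\bigl((1+\theta^2)a + 2\theta b\bigr)$, so the common denominator and the factor $\tfrac12$ cancel in the ratio, leaving
\[
\frac{q_{r^*}^{\theta}(y_1\succ y_2\mid x)}{1-q_{r^*}^{\theta}(y_1\succ y_2\mid x)} = \frac{a\bigl(2\theta a + (1+\theta^2)b\bigr)}{b\bigl((1+\theta^2)a + 2\theta b\bigr)}.
\]
Dividing numerator and denominator by $ab$ and substituting $a/b = \exp(\Delta r^*)$ and $b/a = \exp(-\Delta r^*)$ rewrites this as $\exp(\Delta r^*)\cdot\frac{2\theta + (1+\theta^2)\exp(-\Delta r^*)}{1+\theta^2+2\theta\exp(-\Delta r^*)}$, and taking logarithms yields exactly \eqref{bias}.

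There is no genuine analytic obstacle here — the argument is a one-line corollary of \autoref{relation} followed by bookkeeping — so the only points requiring care are (i) expanding $(a+\theta b)(\theta a+b)$ correctly and collecting the $ab$ terms when forming the numerators of $q_{r^*}^{\theta}$ and $1-q_{r^*}^{\theta}$, and (ii) observing that \eqref{r_hat} determines $\Delta\hat r$ \emph{pointwise} because $\sigma$ is injective, so \eqref{bias} is a true equality for every $(x,y_1,y_2)$ and not merely an identity in expectation. As a sanity check I would also note that setting $\theta = 1$ collapses the correction term to $\log 1 = 0$, recovering $\Delta\hat r = \Delta r^*$, consistent with BTT reducing to BT.
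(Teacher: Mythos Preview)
Your proof is correct. Both your argument and the paper's start from \eqref{r_hat} in \autoref{relation}, but the paper proceeds by subtracting the two instances of \eqref{r_hat} to obtain $p_{\hat r}(y_1\succ y_2\mid x)-p_{\hat r}(y_2\succ y_1\mid x)=p_{r^*}^{\theta}(y_1\succ y_2\mid x)-p_{r^*}^{\theta}(y_2\succ y_1\mid x)$ and then carries out a fairly long chain of algebraic manipulations to isolate $\exp(\Delta\hat r-\Delta r^*)$. Your route---applying the logit directly to $p_{\hat r}=\sigma(\Delta\hat r)=q_{r^*}^{\theta}$ and factoring the numerators of $q_{r^*}^{\theta}$ and $1-q_{r^*}^{\theta}$ over the common denominator $(a+\theta b)(\theta a+b)$---is more economical: it replaces the paper's page of manipulations with a two-line factorization and a symmetry observation. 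The paper's detour through the difference of probabilities is equivalent (it is essentially working with $\tanh(\Delta\hat r/2)$ rather than the logit), so neither approach yields anything the other does not, but yours gets to \eqref{bias} with less bookkeeping and makes the $\theta=1$ sanity check transparent.
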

\textbf{Proof Sketch:} From \eqref{r_hat}, we can know that:
\begin{equation*}
    \begin{aligned}
        p_{\hat{r}}(y_1 \succ y_2 \mid x) = &p_{r^*}^{\theta}(y_1 \succ y_2 \mid x) + \frac{1}{2}p_{r^*}^{\theta}(y_1 = y_2 \mid x)\\
        p_{\hat{r}}(y_2 \succ y_1 \mid x) = &p_{r^*}^{\theta}(y_2 \succ y_1 \mid x) + \frac{1}{2}p_{r^*}^{\theta}(y_2 = y_1 \mid x)\\
    \end{aligned}
\end{equation*}
By subtraction, we can get:
\begin{equation*}
p_{\hat{r}}(y_1 \succ y_2 \mid x) - p_{\hat{r}}(y_2 \succ y_1 \mid x) = p_{r^*}^{\theta}(y_1 \succ y_2 \mid x) - p_{r^*}^{\theta}(y_2 \succ y_1 \mid x)
\end{equation*}
Consequently, we can derive the relation between $\Delta \hat{r}$ and $\Delta r^*$. Detailed proof can be found in the appendix~\ref{proof_main_theorem}.

To analyze the bias term \( \log \left( \frac{2 \theta + \left(1 + \theta^2\right)\exp (- \Delta r^*)}{1 + \theta^2 + 2\theta \exp(-\Delta r^*)}\right ) \), we can observe that its sign is opposite to \( \Delta r^* \), indicating that the preference strength is attenuated due to latent preference model mismatch. Additionally, the bias term is a sigmoid-shaped function, bounded by \( \log(\frac{1+\theta^2}{2 \theta}) \) in absolute value. However, despite this bound, the bias term can still be substantial. As mentioned earlier, \citet{wang2024secrets} trained 10 different reward models on the Anthropic’s HH-RLHF dataset \citep{bai2022training}, and the mean preference strength of 83.6\% of the data falls within the interval $[-0.6, 2.94]$. In this range, the ratio between the bias term and \( \Delta r^* \) can be considerable, as depicted in \autoref{bias_fig}.

\begin{figure*}[ht]
    \centering
    \begin{minipage}{0.49\linewidth}
        \centering
        \includegraphics[width=1.0\linewidth]{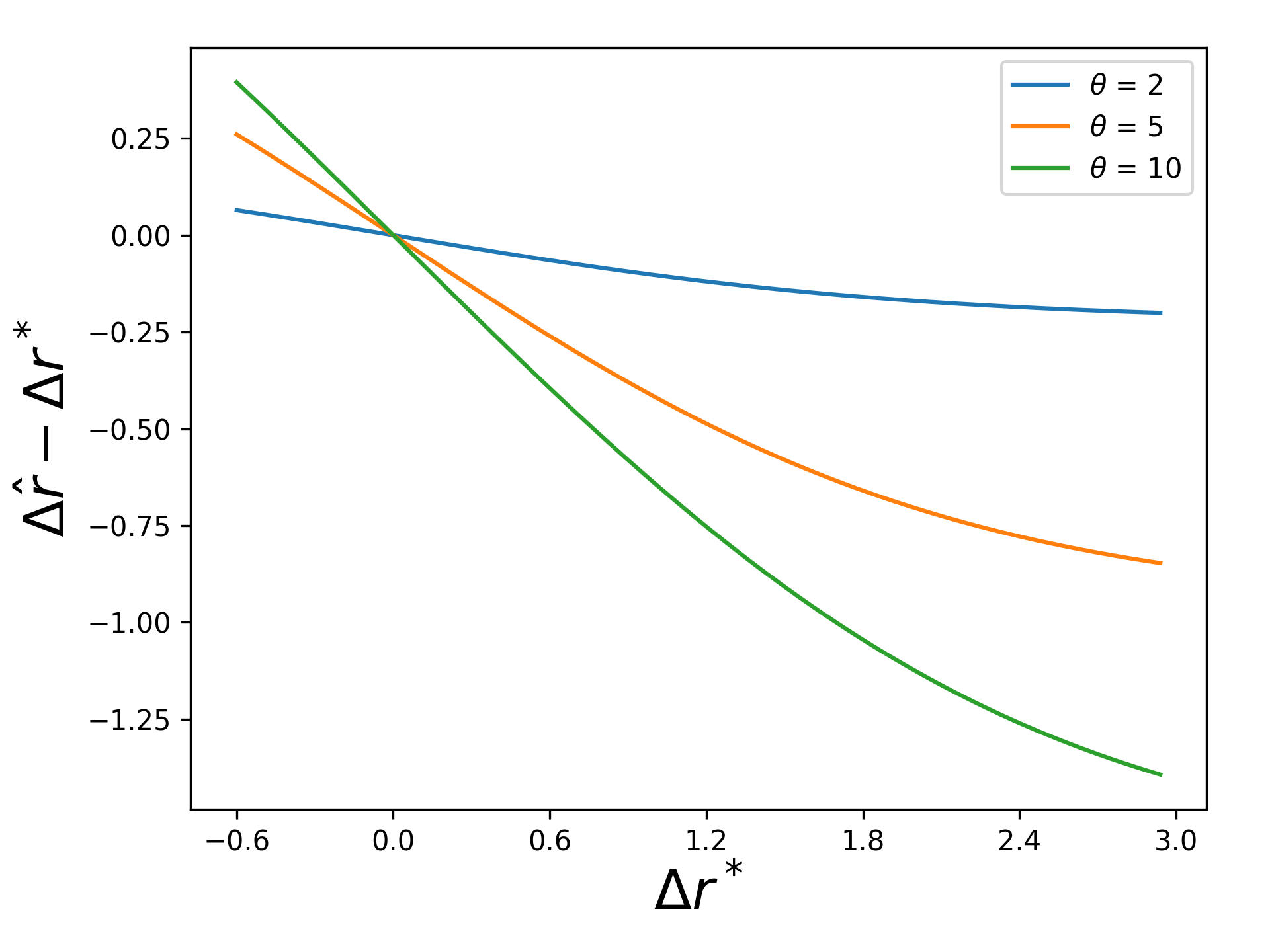}
    \end{minipage}
    \begin{minipage}{0.49\linewidth}
        \centering
        \includegraphics[width=1.0\linewidth]{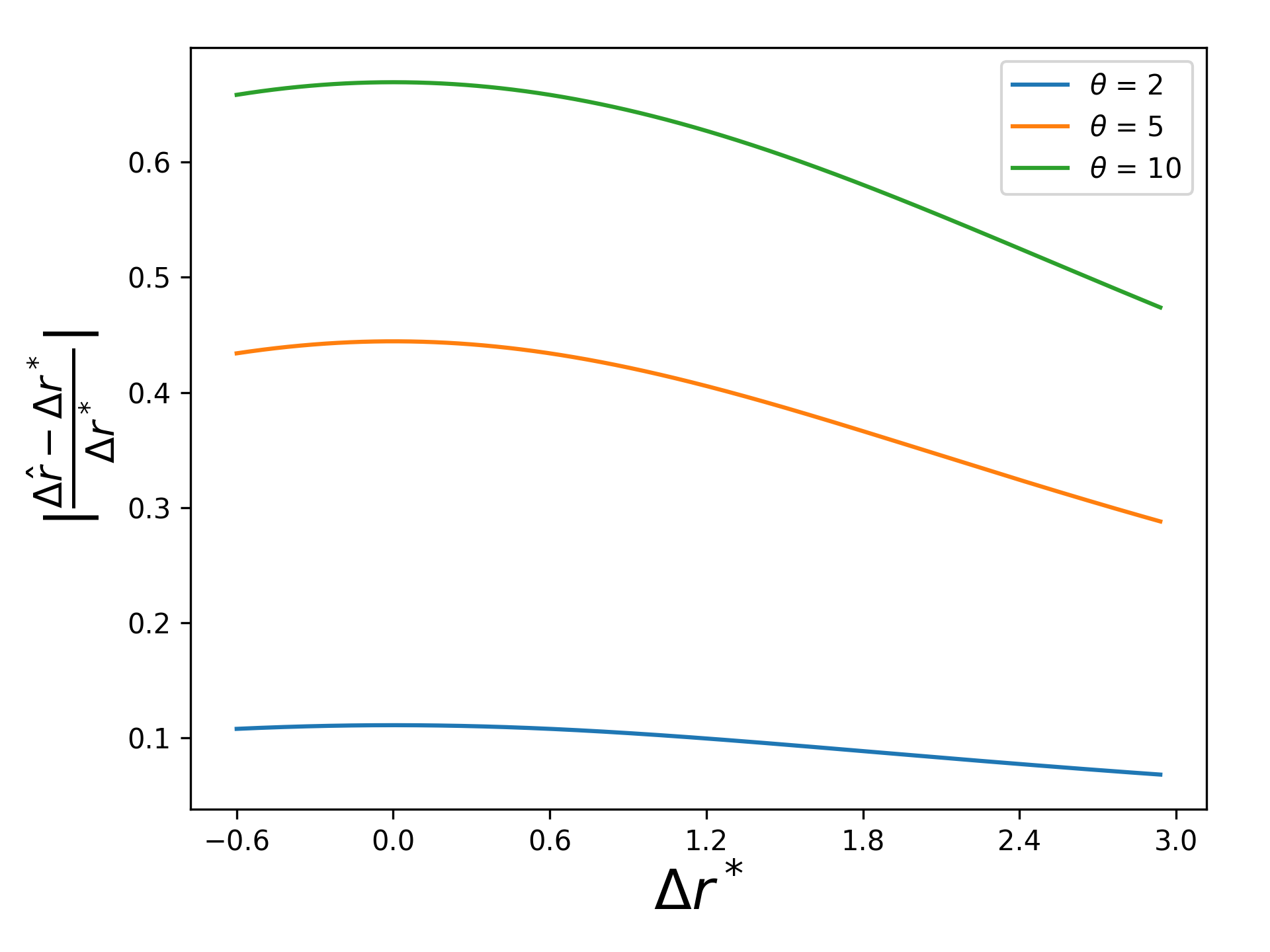}
    \end{minipage}
    \caption{Bias term has a significant impact}
    \label{bias_fig}
\end{figure*}

\subsection{Preference Strength Bias Correction Algorithm}
Since conventional preference datasets typically lack ties, we propose a novel method to address the model mismatch issue on preference datasets without ties, assuming the latent preference model is the BTT model.
We acknowledge that the right side of \eqref{bias} is a monotonic function with respect to \( \Delta r^* \), implying a one-to-one mapping relationship between \( \Delta \hat{r} \) and \( \Delta r^* \). Thus, during the optimization procedure, when obtaining the value of \( \Delta \hat{r} \), we can treat \eqref{bias} as a nonlinear equation and solve for the value of \( \Delta r^* \), subsequently subtracting the bias term from the current \( \Delta \hat{r} \). The detailed description of this method can be found in Alg.~\ref{alg1}. We note that this method can be viewed as a variant of DPO with an offset (ODPO) \citep{amini2024direct} when fine tuning with DPO.
\begin{algorithm}[h]
\caption{Preference Strength Bias Correction}
\label{alg1}
\textbf{Input}: Preference dataset without ties $D^{BTTN}$,\\
$\theta$: Parameter of the BTT model,\\
$r_{\psi}$: Parameterized reward model with parameters $\psi$,\\
\textbf{Output}: $\psi$
\begin{algorithmic}[1] 
\WHILE{$r_{\psi}$ dose not converge}
\STATE Calculate the current value of $r_{\psi}$.
\STATE Solve the nonlinear equation \eqref{bias} with $\Delta \hat{r} = \Delta r_{\psi}$, and get the value of $\Delta r^*$.
\STATE Plug $\Delta r = \Delta r_{\psi} - \log \left( \frac{2 \theta + \left(1 + \theta^2\right)\exp (- \Delta r^*)}{1 + \theta^2 + 2\theta \exp(-\Delta r^*)}\right )$ into the loss function \eqref{LCE_BT}.
\STATE Perform optimization step for the new loss function.
\ENDWHILE
\STATE \textbf{return} $\psi$
\end{algorithmic}
\end{algorithm}
\section{Experiments}\label{exp_section}
In this section, we empirically demonstrate the benefits of incorporating ties in preference learning. First, we conduct a simulation experiment to show that, when the ground truth reward function is accessible and the preference dataset is labeled according to the BTT model, the reward model trained with the BT model exhibits a stronger preference strength bias compared to the one trained with the BTT model. Second, we apply Algorithm~\ref{alg1} to address the model mismatch problem on conventional preference datasets without ties. Finally, we use two state-of-the-art open-source LLMs Llama3-70b (abbreviated as Llama) \citep{meta2024introducing} and Qwen2-72b-instruct (abbreviated as Qwen) \citep{yang2024qwen2} to label whether pairs in Anthropic’s HH-RLHF dataset \citep{bai2022training} are tied, thereby generating a simulated preference dataset with ties. We then evaluate the fine-tuning using BT and BTT on this dataset. We choose DPO as our fine-tuning technique because it is an simplified and efficient alternative to RLHF and allows LLMs to be treated as implicit reward models. We follow \citet{rafailov2024direct}, fine-tuning on Anthropic's HH-RLHF dataset \citep{bai2022training} and consistently setting $\beta = 0.1$ for DPO. Additional experimental details can be found in Appendix~\ref{exp_detail}.
\subsection{Preference Bias With The Ground Truth Reward}
In this section, we randomly generate a ground truth reward function $r^*(x, y), x \in \mathbb{N}^+, y \in [0,1,2,3]^n$, along with a preference dataset labeled by the BTT model using $r^*$ (with tied pairs randomly assigned preferences). We then train two reward models, both parameterized by the same neural network, on this dataset using the loss functions \ref{LCE_BTT} and \ref{LCE_BT}, respectively. These trained reward models are denoted as $r^{BTT}$ and $r^{BT}$. Next, we evaluate the average preference bias of these two reward models relative to the ground truth reward under varying preference parameters $\theta$. The preference bias difference, $\Delta = |\Delta r^{BT} - \Delta r^*| - |\Delta r^{BTT} - \Delta r^*|$, is shown in Table~\ref{bias_ground_truth}. From the results, we observe that the preference bias of $r^{BTT}$ is consistently smaller than that of $r^{BT}$, indicating that the BTT model effectively reduces the preference bias with respect to the ground truth reward function, resulting in a more accurate reward model. We also find that as $\theta$ increases, the preference bias difference becomes larger, which aligns with the trends shown in \autoref{bias_fig}, as a larger $\theta$ in ground truth preference model indicates a higher probability of ties occurring.
\begin{table}[h]
  \caption{The preference bias difference between $r^{BT}$ and $r^{BTT}$}
  \label{bias_ground_truth}
  \centering
  \begin{tabularx}{\textwidth}{XXX}
    \toprule
 $\theta = 2$ & $\theta = 5$ & $\theta = 10$\\
 0.0206 & 0.0237 & 0.0353 \\
    \midrule
    \bottomrule
  \end{tabularx}%
\end{table}

\subsection{DPO With a Bias-Correction Offset}
We apply Alg.~\ref{alg1} to the conventional preference dataset without ties, Anthropic's HH-RLHF, in order to mitigate the bias term using a DPO reward model. It is important to note that this approach can be viewed as a variant of the ODPO method \citep{amini2024direct}, with the key difference being the bias-correction term. We train the small Pythia-160M model \citep{biderman2023pythia} for one epoch and record the reward preference accuracy on the test set. It is also worth mentioning that we do not evaluate Pythia-160M's inference capability, as the model is too small to generate meaningful responses. The experimental results are presented in \autoref{exp_odpo}. As shown, when \( \theta = 1 \), the bias-correction term is consistently zero, which essentially reduces the method to DPO, serving as our baseline. From \autoref{exp_odpo}, we observe that all three ODPO methods, with \( \theta \in \{2, 5, 10\} \), significantly outperform DPO, with ODPO at \( \theta = 5 \) showing more than a $10\%$ improvement in accuracy.
\begin{table}[h]
  \caption{Test Accuracy of DPO with a bias-Correction offset}
  \label{exp_odpo}
  \centering
  \begin{tabularx}{\textwidth}{XXXX}
    \toprule
$\theta = 1$ & $\theta = 2$ & $\theta = 5$ & $\theta = 10$\\
0.5333 & 0.5583 & 0.6042 & 0.5958 \\
    \midrule
    \bottomrule
  \end{tabularx}%
\end{table}

To further validate the effectiveness of Alg~\ref{alg1}, we fine-tuned the larger Pythia-2.8B model \citep{biderman2023pythia} on the HH-RLHF dataset using DPO and DPO with a bias-correction offset, and evaluated their responses using Llama and Qwen. Due to limited computing resources, we only conducted experiments for the optimal $\theta$, i.e., 5, as indicated in \autoref{exp_odpo}. The results, shown in \autoref{odpo_winrate}, demonstrate that our method significantly outperforms DPO, confirming the effectiveness of the preference strength bias-correction offset. It is important to note that we provide evaluators with the option to label ties, and Llama and Qwen may occasionally refuse to evaluate certain offensive content. Therefore, we only include samples that are clearly evaluated as wins or losses when calculating the win rate.
\begin{table}[h]
  \caption{Win rate of DPO with a bias-correction offset against DPO}
  \label{odpo_winrate}
  \centering
  \begin{tabularx}{\textwidth}{XXX}
    \toprule
Evaluator & Llama & Qwen\\
Win rate & 0.5582 & 0.5370 \\
    \midrule
    \bottomrule
  \end{tabularx}
\end{table}
\subsection{Synthetic Preference Datasets with Ties}
The most compelling experiment is to fine-tune two models using BT and BTT, respectively, on a real preference dataset with ties and then compare their win rates. However, due to the lack of human-labeled preference datasets with ties and the high cost of manual annotation and evaluation, we use an LLM to simulate human judgment and label ties in Anthropic's HH-RLHF dataset. We then fine-tune Pythia-2.8B \citep{biderman2023pythia} on this synthetic preference dataset with ties, applying the BT and BTT preference models, and evaluate the responses. When using the loss function~\ref{LCE_BTT}, we refer to this approach as TDPO. To reduce bias, we utilize Llama and Qwen, alternately as labelers and evaluators. The two labeled preference datasets are summarized in \autoref{tied_dataset}. 
\begin{table}[H]
  \caption{Summary of preference datasets with ties}
  \label{tied_dataset}
  \centering
  \begin{tabularx}{\textwidth}{XXX}
    \toprule
labeler & Llama & Qwen\\
\# of tied samples & 847 & 3553 \\
    \midrule
    \bottomrule
  \end{tabularx}
\end{table}
\begin{figure}[ht]
    \caption{TDPO win rate against DPO with varying ties sample ratio in preference dataset}
    \centering
    \includegraphics[width=1.0\linewidth]{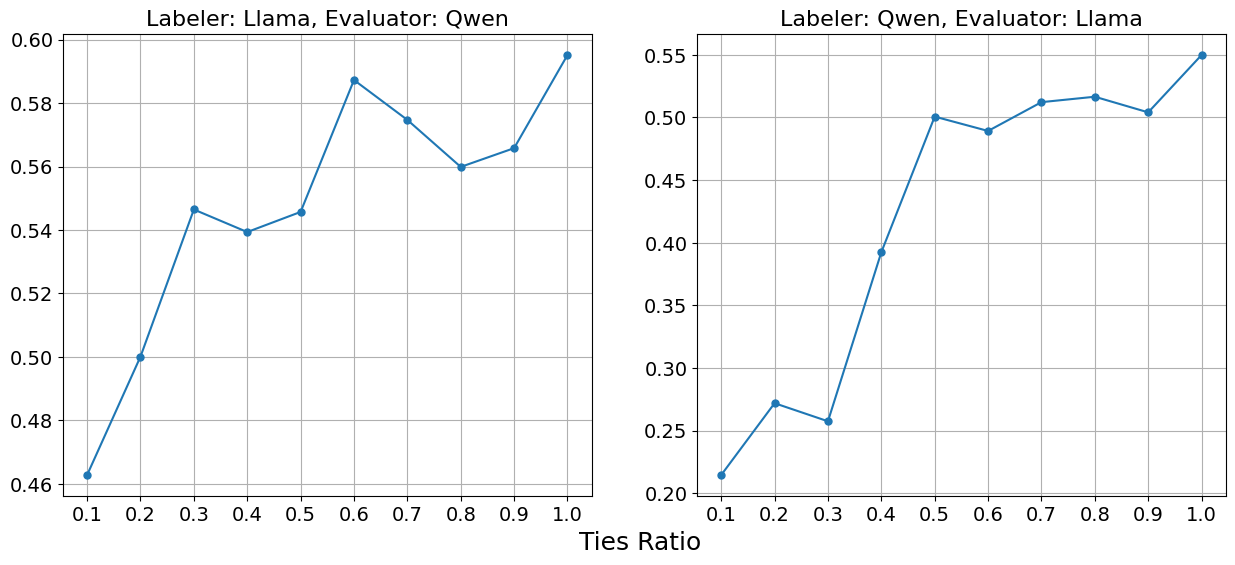}
    \label{fig:win_rate_vs_ties_ratio}
\end{figure}

We observe that Anthropic's HH-RLHF dataset contains over $160k$ samples, with only a small portion labeled as ties. As a result, directly fine-tuning LLMs on the entire labeled dataset would lead to minimal impact from the tied samples. To emphasize the importance of these tied samples, we train DPO and TDPO on preference datasets with varying percentages of tied samples, with untied samples randomly selected. We still only conducted experiments for the optimal $\theta$, i.e., $5$, due to limited computing resources. The win rate results are presented in \autoref{fig:win_rate_vs_ties_ratio}. From the results, we observe that, regardless of the labeler and evaluator, the win rate of TDPO increases as the number of tied samples increases, and it significantly exceeds $50\%$ when only tied samples are present. This demonstrates that incorporating BTT with tied samples improves the quality of the trained reward model. Moreover, as shown in \autoref{fig:win_rate_vs_ties_ratio}, when Llama is the labeler and Qwen is the evaluator, the win rate of TDPO consistently exceeds $50\%$ when the tie ratio is greater than $0.2$. In contrast, when Qwen is the labeler and Llama is the evaluator, TDPO's performance is slightly lower. This may be due to Qwen's less strict criteria for ties, resulting in an overabundance of tied samples.

\section{Discussion}
In this paper, we introduced the concept of incorporating ties into preference modeling. Specifically, we applied the generalized Bradley-Terry model—the Bradley-Terry model with ties—to more accurately capture human preferences. Additionally, we analyzed the bias in measuring preference strength due to model mismatch and proposed a novel method to mitigate this bias. Extensive experiments demonstrate the benefits of considering ties in preference modeling. A limitation of this work is the absence of real human-annotated preference datasets with ties, as collecting such data is both expensive and time-consuming. Future work involving human-labeled preference datasets with ties could significantly improve the effectiveness of preference modeling.

\section*{Acknowledgements}
This research is partially supported by the National Natural Science Foundation of China (NSFC) [Grant NSFC-72225009, 72394360, 72394365].

\bibliography{ref.bib}

\begin{thebibliography}{32}
\providecommand{\natexlab}[1]{#1}
\providecommand{\url}[1]{\texttt{#1}}
\expandafter\ifx\csname urlstyle\endcsname\relax
  \providecommand{\doi}[1]{doi: #1}\else
  \providecommand{\doi}{doi: \begingroup \urlstyle{rm}\Url}\fi

\bibitem[Agresti(2012)]{agresti2012categorical}
A.~Agresti.
\newblock \emph{Categorical data analysis}, volume 792.
\newblock John Wiley \& Sons, 2012.

\bibitem[Amini et~al.(2024)Amini, Vieira, and Cotterell]{amini2024direct}
A.~Amini, T.~Vieira, and R.~Cotterell.
\newblock Direct preference optimization with an offset.
\newblock \emph{arXiv preprint arXiv:2402.10571}, 2024.

\bibitem[Bai et~al.(2022)Bai, Jones, Ndousse, Askell, Chen, DasSarma, Drain, Fort, Ganguli, Henighan, et~al.]{bai2022training}
Y.~Bai, A.~Jones, K.~Ndousse, A.~Askell, A.~Chen, N.~DasSarma, D.~Drain, S.~Fort, D.~Ganguli, T.~Henighan, et~al.
\newblock Training a helpful and harmless assistant with reinforcement learning from human feedback.
\newblock \emph{arXiv preprint arXiv:2204.05862}, 2022.

\bibitem[Biderman et~al.(2023)Biderman, Schoelkopf, Anthony, Bradley, O’Brien, Hallahan, Khan, Purohit, Prashanth, Raff, et~al.]{biderman2023pythia}
S.~Biderman, H.~Schoelkopf, Q.~G. Anthony, H.~Bradley, K.~O’Brien, E.~Hallahan, M.~A. Khan, S.~Purohit, U.~S. Prashanth, E.~Raff, et~al.
\newblock Pythia: A suite for analyzing large language models across training and scaling.
\newblock In \emph{International Conference on Machine Learning}, pages 2397--2430. PMLR, 2023.

\bibitem[Bradley and Terry(1952)]{bradley1952rank}
R.~A. Bradley and M.~E. Terry.
\newblock Rank analysis of incomplete block designs: I. the method of paired comparisons.
\newblock \emph{Biometrika}, 39\penalty0 (3/4):\penalty0 324--345, 1952.

\bibitem[Chowdhury et~al.(2024)Chowdhury, Kini, and Natarajan]{chowdhury2024provably}
S.~R. Chowdhury, A.~Kini, and N.~Natarajan.
\newblock Provably robust dpo: Aligning language models with noisy feedback.
\newblock \emph{arXiv preprint arXiv:2403.00409}, 2024.

\bibitem[Christiano et~al.(2017)Christiano, Leike, Brown, Martic, Legg, and Amodei]{christiano2017deep}
P.~F. Christiano, J.~Leike, T.~Brown, M.~Martic, S.~Legg, and D.~Amodei.
\newblock Deep reinforcement learning from human preferences.
\newblock \emph{Advances in neural information processing systems}, 30, 2017.

\bibitem[Dai et~al.(2023)Dai, Pan, Sun, Ji, Xu, Liu, Wang, and Yang]{dai2023safe}
J.~Dai, X.~Pan, R.~Sun, J.~Ji, X.~Xu, M.~Liu, Y.~Wang, and Y.~Yang.
\newblock Safe rlhf: Safe reinforcement learning from human feedback.
\newblock \emph{arXiv preprint arXiv:2310.12773}, 2023.

\bibitem[Gao et~al.(2023)Gao, Madaan, Zhou, Alon, Liu, Yang, Callan, and Neubig]{gao2023pal}
L.~Gao, A.~Madaan, S.~Zhou, U.~Alon, P.~Liu, Y.~Yang, J.~Callan, and G.~Neubig.
\newblock Pal: Program-aided language models.
\newblock In \emph{International Conference on Machine Learning}, pages 10764--10799. PMLR, 2023.

\bibitem[Go et~al.(2023)Go, Korbak, Kruszewski, Rozen, Ryu, and Dymetman]{go2023aligning}
D.~Go, T.~Korbak, G.~Kruszewski, J.~Rozen, N.~Ryu, and M.~Dymetman.
\newblock Aligning language models with preferences through f-divergence minimization.
\newblock \emph{arXiv preprint arXiv:2302.08215}, 2023.

\bibitem[Huang et~al.(2006)Huang, Weng, Lin, and Ridgeway]{huang2006generalized}
T.-K. Huang, R.~C. Weng, C.-J. Lin, and G.~Ridgeway.
\newblock Generalized bradley-terry models and multi-class probability estimates.
\newblock \emph{Journal of Machine Learning Research}, 7\penalty0 (1), 2006.

\bibitem[Kaufmann et~al.(2023)Kaufmann, Weng, Bengs, and H{\"u}llermeier]{kaufmann2023survey}
T.~Kaufmann, P.~Weng, V.~Bengs, and E.~H{\"u}llermeier.
\newblock A survey of reinforcement learning from human feedback.
\newblock \emph{arXiv preprint arXiv:2312.14925}, 2023.

\bibitem[Kenton et~al.(2021)Kenton, Everitt, Weidinger, Gabriel, Mikulik, and Irving]{kenton2021alignment}
Z.~Kenton, T.~Everitt, L.~Weidinger, I.~Gabriel, V.~Mikulik, and G.~Irving.
\newblock Alignment of language agents.
\newblock \emph{arXiv preprint arXiv:2103.14659}, 2021.

\bibitem[Korbak et~al.(2022)Korbak, Elsahar, Kruszewski, and Dymetman]{korbak2022reinforcement}
T.~Korbak, H.~Elsahar, G.~Kruszewski, and M.~Dymetman.
\newblock On reinforcement learning and distribution matching for fine-tuning language models with no catastrophic forgetting.
\newblock \emph{Advances in Neural Information Processing Systems}, 35:\penalty0 16203--16220, 2022.

\bibitem[Luce(2005)]{luce2005individual}
R.~D. Luce.
\newblock \emph{Individual choice behavior: A theoretical analysis}.
\newblock Courier Corporation, 2005.

\bibitem[Meta(2024)]{meta2024introducing}
A.~Meta.
\newblock Introducing meta llama 3: The most capable openly available llm to date.
\newblock \emph{Meta AI}, 2024.

\bibitem[Moor et~al.(2023)Moor, Banerjee, Abad, Krumholz, Leskovec, Topol, and Rajpurkar]{moor2023foundation}
M.~Moor, O.~Banerjee, Z.~S.~H. Abad, H.~M. Krumholz, J.~Leskovec, E.~J. Topol, and P.~Rajpurkar.
\newblock Foundation models for generalist medical artificial intelligence.
\newblock \emph{Nature}, 616\penalty0 (7956):\penalty0 259--265, 2023.

\bibitem[Ouyang et~al.(2022)Ouyang, Wu, Jiang, Almeida, Wainwright, Mishkin, Zhang, Agarwal, Slama, Ray, et~al.]{ouyang2022training}
L.~Ouyang, J.~Wu, X.~Jiang, D.~Almeida, C.~Wainwright, P.~Mishkin, C.~Zhang, S.~Agarwal, K.~Slama, A.~Ray, et~al.
\newblock Training language models to follow instructions with human feedback.
\newblock \emph{Advances in neural information processing systems}, 35:\penalty0 27730--27744, 2022.

\bibitem[Peng et~al.(2019)Peng, Kumar, Zhang, and Levine]{peng2019advantage}
X.~B. Peng, A.~Kumar, G.~Zhang, and S.~Levine.
\newblock Advantage-weighted regression: Simple and scalable off-policy reinforcement learning.
\newblock \emph{arXiv preprint arXiv:1910.00177}, 2019.

\bibitem[Peters and Schaal(2007)]{peters2007reinforcement}
J.~Peters and S.~Schaal.
\newblock Reinforcement learning by reward-weighted regression for operational space control.
\newblock In \emph{Proceedings of the 24th international conference on Machine learning}, pages 745--750, 2007.

\bibitem[Plackett(1975)]{plackett1975analysis}
R.~L. Plackett.
\newblock The analysis of permutations.
\newblock \emph{Journal of the Royal Statistical Society Series C: Applied Statistics}, 24\penalty0 (2):\penalty0 193--202, 1975.

\bibitem[Rafailov et~al.(2024)Rafailov, Sharma, Mitchell, Manning, Ermon, and Finn]{rafailov2024direct}
R.~Rafailov, A.~Sharma, E.~Mitchell, C.~D. Manning, S.~Ermon, and C.~Finn.
\newblock Direct preference optimization: Your language model is secretly a reward model.
\newblock \emph{Advances in Neural Information Processing Systems}, 36, 2024.

\bibitem[Rao and Kupper(1967)]{rao1967ties}
P.~Rao and L.~L. Kupper.
\newblock Ties in paired-comparison experiments: A generalization of the bradley-terry model.
\newblock \emph{Journal of the American Statistical Association}, 62\penalty0 (317):\penalty0 194--204, 1967.

\bibitem[Song et~al.(2024)Song, Yu, Li, Yu, Huang, Li, and Wang]{song2024preference}
F.~Song, B.~Yu, M.~Li, H.~Yu, F.~Huang, Y.~Li, and H.~Wang.
\newblock Preference ranking optimization for human alignment.
\newblock In \emph{Proceedings of the AAAI Conference on Artificial Intelligence}, volume~38, pages 18990--18998, 2024.

\bibitem[Stiennon et~al.(2020)Stiennon, Ouyang, Wu, Ziegler, Lowe, Voss, Radford, Amodei, and Christiano]{stiennon2020learning}
N.~Stiennon, L.~Ouyang, J.~Wu, D.~Ziegler, R.~Lowe, C.~Voss, A.~Radford, D.~Amodei, and P.~F. Christiano.
\newblock Learning to summarize with human feedback.
\newblock \emph{Advances in Neural Information Processing Systems}, 33:\penalty0 3008--3021, 2020.

\bibitem[Touvron et~al.(2023)Touvron, Martin, Stone, Albert, Almahairi, Babaei, Bashlykov, Batra, Bhargava, Bhosale, et~al.]{touvron2023llama}
H.~Touvron, L.~Martin, K.~Stone, P.~Albert, A.~Almahairi, Y.~Babaei, N.~Bashlykov, S.~Batra, P.~Bhargava, S.~Bhosale, et~al.
\newblock Llama 2: Open foundation and fine-tuned chat models.
\newblock \emph{arXiv preprint arXiv:2307.09288}, 2023.

\bibitem[Wang et~al.(2024)Wang, Zheng, Chen, Liu, Dou, Huang, Shen, Jin, Zhou, Shi, et~al.]{wang2024secrets}
B.~Wang, R.~Zheng, L.~Chen, Y.~Liu, S.~Dou, C.~Huang, W.~Shen, S.~Jin, E.~Zhou, C.~Shi, et~al.
\newblock Secrets of rlhf in large language models part ii: Reward modeling.
\newblock \emph{arXiv preprint arXiv:2401.06080}, 2024.

\bibitem[Yang et~al.(2024)Yang, Yang, Hui, Zheng, Yu, Zhou, Li, Li, Liu, Huang, et~al.]{yang2024qwen2}
A.~Yang, B.~Yang, B.~Hui, B.~Zheng, B.~Yu, C.~Zhou, C.~Li, C.~Li, D.~Liu, F.~Huang, et~al.
\newblock Qwen2 technical report.
\newblock \emph{arXiv preprint arXiv:2407.10671}, 2024.

\bibitem[Zhou et~al.(2023)Zhou, Liu, Yang, Shao, Liu, Yue, Ouyang, and Qiao]{zhou2023beyond}
Z.~Zhou, J.~Liu, C.~Yang, J.~Shao, Y.~Liu, X.~Yue, W.~Ouyang, and Y.~Qiao.
\newblock Beyond one-preference-for-all: Multi-objective direct preference optimization.
\newblock \emph{arXiv preprint arXiv:2310.03708}, 2023.

\bibitem[Zhu et~al.(2023)Zhu, Jordan, and Jiao]{zhu2023principled}
B.~Zhu, M.~Jordan, and J.~Jiao.
\newblock Principled reinforcement learning with human feedback from pairwise or k-wise comparisons.
\newblock In \emph{International Conference on Machine Learning}, pages 43037--43067. PMLR, 2023.

\bibitem[Zhu et~al.(2024)Zhu, Jordan, and Jiao]{zhu2024iterative}
B.~Zhu, M.~I. Jordan, and J.~Jiao.
\newblock Iterative data smoothing: Mitigating reward overfitting and overoptimization in rlhf.
\newblock \emph{arXiv preprint arXiv:2401.16335}, 2024.

\bibitem[Ziegler et~al.(2019)Ziegler, Stiennon, Wu, Brown, Radford, Amodei, Christiano, and Irving]{ziegler2019fine}
D.~M. Ziegler, N.~Stiennon, J.~Wu, T.~B. Brown, A.~Radford, D.~Amodei, P.~Christiano, and G.~Irving.
\newblock Fine-tuning language models from human preferences.
\newblock \emph{arXiv preprint arXiv:1909.08593}, 2019.

\end{thebibliography}
\clearpage

\appendix

\section{Proof of \autoref{main_theorem}}
\begin{proof}\label{proof_main_theorem}
    From \eqref{r_hat}, we can know that:
    \begin{equation*}
        \begin{aligned}
            p_{\hat{r}}(y_1 \succ y_2 \mid x) = &p_{r^*}^{\theta}(y_1 \succ y_2 \mid x) + \frac{1}{2}p_{r^*}^{\theta}(y_1 = y_2 \mid x)\\
            p_{\hat{r}}(y_2 \succ y_1 \mid x) = &p_{r^*}^{\theta}(y_2 \succ y_1 \mid x) + \frac{1}{2}p_{r^*}^{\theta}(y_2 = y_1 \mid x)\\
        \end{aligned}
    \end{equation*}
    By subtraction, we can get:
    \begin{equation*}
    p_{\hat{r}}(y_1 \succ y_2 \mid x) - p_{\hat{r}}(y_2 \succ y_1 \mid x) = p_{r^*}^{\theta}(y_1 \succ y_2 \mid x) - p_{r^*}^{\theta}(y_2 \succ y_1 \mid x)
    \end{equation*}
    Therefore,
    \begin{equation*}
        \begin{aligned}
            \frac{\exp(\Delta \hat{r})}{1 + \exp(\Delta \hat{r})} - \frac{1}{1 + \exp(\Delta \hat{r})} &= \frac{\exp(\Delta r^*)}{\theta +  \exp(\Delta r^*)} - \frac{1}{1 +  \theta \exp(\Delta r^*)}\\
            \frac{\exp(\Delta \hat{r} - \Delta r^*)}{1 + \exp(\Delta \hat{r})} - \frac{1}{\left(1 + \exp(\Delta \hat{r})\right)\exp(\Delta r^*)} &= \frac{1}{\theta +  \exp(\Delta r^*)} - \frac{1}{\left(1 + \theta \exp(\Delta r^*)\right)\exp(\Delta r^*)}\\
        \end{aligned}
    \end{equation*}
    Then we can get:
    \begin{equation*}
        \begin{aligned}
            &\exp(\Delta \hat{r} - \Delta r^*) \\
            &= \left(1 + \exp(\Delta \hat{r})\right) \left[\frac{1}{\theta +  \exp(\Delta r^*)} - \frac{1}{\left(1 + \theta \exp(\Delta r^*)\right)\exp(\Delta r^*)} + \frac{1}{\left(1 + \exp(\Delta \hat{r})\right)\exp(\Delta r^*)}\right]\\
            &= \frac{1 + \exp(\Delta \hat{r})}{\theta + \exp(\Delta r^*)} + \frac{\theta \exp(\Delta r^*) - \exp(\Delta \hat{r})}{\left(1 + \theta \exp(\Delta r^*)\right)\exp(\Delta r^*)} \\
            &= \frac{1 + \exp(\Delta \hat{r})}{\theta + \exp(\Delta r^*)} + \frac{\theta - \exp(\Delta \hat{r} - \Delta r^*)}{1 + \theta \exp(\Delta r^*)}
        \end{aligned}
    \end{equation*}
    Consequently,
    \begin{equation*}
        \begin{aligned}
            \exp(\Delta \hat{r} - \Delta r^*)\frac{2 + \theta \exp(\Delta r^*)}{1 + \theta \exp(\Delta r^*)} &= \frac{1 + \exp(\Delta \hat{r})}{\theta + \exp(\Delta r^*)} + \frac{\theta}{1 + \theta \exp(\Delta r^*)} \\
        \end{aligned}
    \end{equation*}
    Then,
    \begin{equation*}
        \begin{aligned}
            \exp(\Delta \hat{r} - \Delta r^*) &= \frac{1 + \theta \exp(\Delta r^*)}{2 + \theta \exp(\Delta r^*)} \cdot \frac{1 + \exp(\Delta \hat{r})}{\theta + \exp(\Delta r^*)} + \frac{\theta}{2 + \theta \exp(\Delta r^*)} \\
            &= \frac{1 + \theta \exp(\Delta r^*)}{2 + \theta \exp(\Delta r^*)} \cdot \frac{\exp(-\Delta r^*) + \exp(\Delta \hat{r} - \Delta r^*)}{1 + \theta \exp(- \Delta r^*)} + \frac{\theta}{2 + \theta \exp(\Delta r^*)} \\
        \end{aligned}
    \end{equation*}
    \begin{equation*}
        \begin{aligned}
            &\exp(\Delta \hat{r} - \Delta r^*)\left(1-\frac{1 + \theta \exp(\Delta r^*)}{\left(2 + \theta \exp(\Delta r^*)\right)\left(1 + \theta \exp(- \Delta r^*)\right)}\right)
            \\&=\exp(\Delta \hat{r} - \Delta r^*)\frac{1 + \theta^2 + 2\theta \exp(-\Delta r^*)}{\left(2 + \theta \exp(\Delta r^*)\right)\left(1 + \theta \exp(- \Delta r^*)\right)}
            \\&= \frac{\left(1 + \theta \exp(\Delta r^*)\right)\exp (- \Delta r^*)}{\left(2 + \theta \exp(\Delta r^*)\right)\left(1 + \theta \exp(- \Delta r^*)\right)} + \frac{\theta}{2 + \theta \exp(\Delta r^*)} \\
            &= \frac{\theta + \exp (- \Delta r^*)}{\left(2 + \theta \exp(\Delta r^*)\right)\left(1 + \theta \exp(- \Delta r^*)\right)} + \frac{\theta}{2 + \theta \exp(\Delta r^*)} \\
        \end{aligned}
    \end{equation*}
    Finally, we can get:
    \begin{equation*}
        \begin{aligned}
            \exp(\Delta \hat{r} - \Delta r^*) &= \frac{\theta + \exp (- \Delta r^*)}{1 + \theta^2 + 2\theta \exp(-\Delta r^*)} + \frac{\theta \left(1 + \theta \exp(- \Delta r^*)\right)}{1 + \theta^2 + 2\theta \exp(-\Delta r^*)}\\
            & = \frac{2 \theta + \left(1 + \theta^2\right)\exp (- \Delta r^*)}{1 + \theta^2 + 2\theta \exp(-\Delta r^*)}
        \end{aligned}
    \end{equation*}
    Denote 
    \begin{equation*}
    f(x) = \frac{2 \theta + \left(1 + \theta^2\right)\exp (x)}{1 + \theta^2 + 2\theta \exp(x)}    
    \end{equation*}
    then we know that:
    \begin{equation*}
    f^{\prime}(x) = \frac{\exp(x)\left(1 - \theta^2\right)^2}{\left(1 + \theta^2 + 2\theta \exp(x)\right)^2} \geq 0
    \end{equation*}
    Therefore, 
    \begin{equation*}
    \frac{2 \theta}{1 + \theta^2} = \lim_{x\rightarrow -\infty}f(x)\leq f(x) \leq \lim_{x\rightarrow \infty}f(x) = \frac{1 + \theta^2}{2 \theta}
    \end{equation*}
    Consequently, we have:
    \begin{equation*}
    |\Delta \hat{r} - \Delta r^*| \leq \log(\frac{1 + \theta^2}{2 \theta})
    \end{equation*}
\end{proof}

\section{Experiment Details}\label{exp_detail}
\subsection{Experimental Setup}
For each single experiment, we choose the same $64$ batch size, RMSprop optimizer, $1e-5$ learning rate and $\beta = 0.1$. All experiments are conducted on 4 Nvidia A800-80GB GPUs in a single node.

\subsection{Win rate prompt for Llama and Qwen}
We use the same prompt for Llama and Qwen to evaluate a pair of responses:

\textit{
For the following query to a chatbot, which response is more helpful? \\
Query: [ ] \\
Response A: [ ]\\ 
Response B: [ ]\\
FIRST provide a one-sentence comparison of the two responses and explain which you feel is more helpful. SECOND, on a new line, state only "A", "B", "Neither" or "Both" to indicate which response is more helpful. Your response should use the format: Comparison: [one-sentence comparison and explanation] More helpful: ["A", "B", "Neither" or "Both"]
}

\subsection{Prompt for Llama and Qwen to Label Ties}
We use the same prompt for Llama and Qwen to label whether a pair of responses are tied:

\textit{
For the following query to a chatbot, are the two responses equally good?\\
Query: [] \\
Response A: []\\
Response B: []\\
Answer with exactly "Yes" or "No".
}
\end{document}